\newtheorem{theorem}{Theorem}
\newtheorem{corollary}{Corollary}
\newtheorem{property}{Property}
\newtheorem{definition}{Definition}
\DeclareMathOperator*{\concat}{\scalebox{1}[1.0]{$\parallel$}}
\title{Learning Universal and Robust 3D Molecular Representations with Graph Convolutional Networks}
\author{
Shuo Zhang$^{1,2}$\and
Yang Liu$^{2}$\and
Li Xie$^{2}$\and
Lei Xie$^{1,2,3}$\\
\vspace{5pt}
\affiliations
\large{$^1$Ph.D. Program in Computer Science, The Graduate Center, The City University of New York\\
$^2$Department of Computer Science, Hunter College, The City University of New York\\
$^3$Helen \& Robert Appel Alzheimer’s Disease Research Institute, Feil Family Brain \& Mind Research Institute, Weill Cornell Medicine, Cornell University}
\emails
\large{szhang4@gradcenter.cuny.edu,
lei.xie@hunter.cuny.edu\\
\vspace{5pt}
January 21st, 2020}
}
\begin{document}

\maketitle

\begin{abstract}
To learn accurate representations of molecules, it is essential to consider both chemical and geometric features. To encode geometric information, many descriptors have been proposed in constrained circumstances for specific types of molecules and do not have the properties to be ``robust": 1. Invariant to rotations and translations; 2. Injective when embedding molecular structures. In this work, we propose a universal and robust Directional Node Pair (DNP) descriptor based on the graph-representations of 3D molecules. Our DNP descriptor is robust compared to previous ones and can be applied to multiple molecular types. To combine the DNP descriptor and chemical features in molecules, we construct the Robust Molecular Graph Convolutional Network (RoM-GCN) which is capable to take both node and edge features into consideration when generating molecule representations. We evaluate our model on protein and small molecule datasets. Our results validate the superiority of the DNP descriptor in incorporating 3D geometric information of molecules. RoM-GCN outperforms all compared baselines.
\end{abstract}

\section{Introduction}\label{Intro}
Machine learning, especially deep learning, has been widely used in molecule-related tasks. To this end, it is essential to convert chemical compounds or biological molecules, e.g. proteins, to representations in an embedding space that can be used as inputs for the deep learning models. 

Among various molecular representations, the 1D or 2D-based descriptors~\cite{weininger1988smiles,zhou2016cnnsite} are popular. However, their limitations are obvious: All these descriptors lack stereoscopic information, which is essential for a rich representation of molecular information and tasks related to 3D conformations of molecules. Thus, over the past few years, a growing body of work has focused on obtaining the representations for 3D molecules.

The 3D-based representations can be obtained by voxelizing molecules into 3D grids and learned with 3D convolutional neural networks (3D CNNs)~\cite{townshend2019end}. However, 3D grids are not invariant to translations and rotations, which are important for the prediction of molecular properties that naturally hold the invariances. Moreover, because of the additional dimensions, voxel-based representations are more computationally expensive in both generating and training compared with the 1D and 2D cases. 

Another way to represent the 3D molecular structures is to consider molecules as connectivity graphs. In several approaches, the 3D geometric information is incorporated into the graphs by using 3D coordinates as node features~\cite{qi2017pointnet,wang2019dynamic}. However, the coordinate-based approaches have the same drawbacks as the voxel-based representations: they do not capture geometric invariances. One way to solve this issue is to use the distance or angle information between the nodes, which captures the invariant relationships in 3D molecular structures. To incorporate those descriptors, graph convolutional networks (GCNs) are used with different message-passing schemes~\cite{gilmer2017neural,schutt2017schnet,fout2017protein,chen2019graph}. However, GCNs usually focus on local geometric relations and only contain distance information within a cutoff. So that the representational powers of those GCN-based models are limited and the resulting representations fail to map molecular structures injectively~\cite{klicperadirectional}. Thus, designing and applying more robust geometric descriptors on molecular graphs are highly expected. Another issue is that most GCN-based models work on the molecular graphs in which the nodes do not have intrinsic geometric information such as directional vectors. The advantage of introducing directional vectors is that they can bring rich geometric information (e.g. orientations) to enhance the representation.

In this work, for any type of molecules, we propose a novel 2-step method to learn a robust representation of 3D structures that contains general geometric information. We denote the term ``robust" as: {\it 1. Invariant to rotations and translations; 2. Injective when embedding molecular structures.} In detail, our 2-step method includes two components: First, the Directional Node Pair (DNP) descriptor, which is a robust geometric descriptor for graphs with directional vector associated nodes. Second, the Robust Molecular Graph Convolutional Network (RoM-GCN), which has novel aggregation schemes to update both of the node and edge features. With our DNP descriptor as edge features, the RoM-GCN can precisely learn the robust 3D molecular representations. The main contributions of our work are as follows:
\begin{itemize}
\item We propose a novel and robust geometric descriptor for 3D molecular structures called Directional Node Pair (DNP) descriptor. The DNP descriptor is built on molecular graphs while the nodes are associated with directional vectors.
\item We propose the Robust Molecular Graph Convolutional Network (RoM-GCN), which can update the node and edge features in a graph together for a higher representational power. The RoM-GCN can obtain robust representations for 3D molecules with the DNP descriptor.
\item We conduct experiments on protein and small molecule datasets and validate the superiority of the DNP descriptor as well as RoM-GCN. Our method outperforms all compared baselines.
\end{itemize}

\section{Related Work}
\paragraph{Graph Convolutional Networks.}
To learn the representations of graph-structured data, Graph Convolutional Networks (GCNs) have been proposed~\cite{duvenaud2015convolutional,niepert2016learning,kipf2017semi}. To improve the representational power, researchers are actively investigating different aspects of GCNs: \cite{xu2018how,morris2019weisfeiler} have built powerful GCNs inspired by comparing GCNs with Weisfeiler-Lehman test of isomorphism. \cite{gilmer2017neural,chen2019graph} have designed GCNs that can learn from both node and edge features. The GCN that we propose focuses on this aspect and utilizes a novel way to incorporate edge features.

\paragraph{Representations for molecules.} Many efforts have been made to obtain molecular representations. Early approaches for such embeddings include fingerprints like the SMILES~\cite{weininger1988smiles}. Over the past few years, people have started to use voxelized 3D grids together with 3D CNN for various tasks~\cite{ragoza2017protein,derevyanko2018deep,townshend2019end}. Other approaches for 3D molecular structures representation rely on graphs. GCNs can be applied on those graph representations to get embeddings of 3D molecules~\cite{duvenaud2015convolutional,kearnes2016molecular,schutt2017schnet,fout2017protein,klicperadirectional}. However, the descriptors used in those works for the embedding of 3D structures are neither robust in terms of transformation invariant and injective nor generalized for any types of molecules. A universal representation of 3D molecules is important since many biological problems such as protein-ligand interaction involve multiple types of molecules. Therefore, we propose a universal and robust geometric descriptor in this work.

\section{Robust Geometric Descriptor for 3D Molecular Structures} \label{Section3}

\begin{table}[tbp]
	\begin{center}
		\resizebox{1.\linewidth}{!}{
		    \normalsize
			\begin{tabular}{lcc}
				\toprule
				{3D Geometric Descriptor} & {Property \ref{Property1}} & {Property \ref{Property2}}  \\
				\midrule
				Voxel-based~\cite{townshend2019end} & $\times$ & \checkmark\\
				Coordinate-based~\cite{wang2019dynamic} & $\times$ & \checkmark\\
				Distance-based~\cite{chen2019graph} & \checkmark & $\times$\\
				Distance and angle-based~\cite{klicperadirectional} & \checkmark & $\times$\\
				Point Feature Histogram~\cite{rusu2009fast} & \checkmark & $\times$\\
				Point Pair Feature~\cite{drost2010model} & \checkmark & $\times$\\
				{\bf Directional Node Pair (ours)} & {\bf \checkmark} & {\bf \checkmark}\\
				\bottomrule
			\end{tabular}	
		}
	\end{center}
	\vspace{-5pt}
	\caption{Summary of whether or not a geometric descriptor for 3D structures has the properties we proposed in Section \ref{Properties} to define a robust geometric descriptor. ``$\checkmark$'' denotes that the descriptor holds the property, ``$\times$'' denotes that the descriptor doesn't hold the property. For the geometric descriptors in GCN-based methods, we evaluate the Property \ref{Property2} on the cases where the graph-representations contain our defined directional nodes in Section \ref{Directional_Node}.}\label{Summary_property}
	\vspace{-5pt}
	\label{tab:rules}
\end{table}

To solve tasks related to molecules in 3D Euclidean space, a necessary step is to get molecular representations that contain chemical and geometric information. In this section, we will focus on the geometric information. Our goal is to propose a ``robust" geometric descriptor that can be used in ML models for further representations of 3D molecules. In Section \ref{Properties}, we will introduce two properties to define a robust geometric descriptor. In Table \ref{Summary_property}, we summarize the geometric descriptors discussed in Section \ref{Intro} and Section \ref{Robust-Descriptor}. We find all those descriptors are not robust under our definition. Then in Section \ref{Robust-Descriptor}, we propose the robust Directional Node Pair (DNP) descriptor based on graph representations described in Section \ref{Directional_Node}. In Section \ref{RoM-GCN}, the DNP descriptor is used to embed the geometric information and to feed the RoM-GCN model to generate robust molecular representations.

\subsection{Properties of Robust Geometric Descriptor}\label{Properties}
Here we use two properties to define a ``robust" descriptor:

\begin{property} \label{Property1}
A ``robust" geometric descriptor $D$ of a molecular structure $s \in \mathbb{R}^3$ has the rotational and translational invariance: For all $s$, D(s) = f (D(s)), where $f: \mathbb{R}^3 \rightarrow \mathbb{R}^3$ is a transformation function for rotation or translation.
\end{property}

\begin{property} \label{Property2}
A ``robust" geometric descriptor $D$ of molecular structures $S \in \mathbb{R}^3$ is an \textbf{injective} mapping: For all $s_1, s_2 \in S$, whenever $D(s_1) = D(s_2)$, then $s_1 = s_2$.
\end{property}

To obtain geometric descriptor that holds both Property \ref{Property1} and \ref{Property2}, recent GCN-based work~\cite{klicperadirectional} makes progress by modeling the directions between pairs of atoms besides distances. The directional information is used by leveraging the angles between the directions in the aggregations of GCN. Such geometric descriptor has higher representational power than the distance-based geometric descriptor. However, the directional information in such descriptor is defined only between the neighboring nodes. The lack of directional feature for each node limits the application of the descriptor to the cases which we will discuss in the next section.

\subsection{Graphs with Directional Nodes} \label{Directional_Node}
We now introduce the graph representation in which the nodes are associated with directional vectors. We denote the nodes with directional vectors as ``directional nodes":

\begin{definition} [Directional Node]\label{Definition}
Let $G=(V, E)$ denote a graph, a node $v \in V$ is a directional node if it is associated with a directional vector $\boldsymbol{u} \in \mathbb{R}^3$.
\end{definition}

The advantage of directional nodes is the geometric information (e.g. orientations) contained in the directional vector feature is indispensable for the graphs with Euclidean structure, e.g. proteins, nucleotides, small molecules, etc. If we use the descriptors that only contain the geometric features {\it between} nodes, the intrinsic geometric features of the nodes will be lost. As a result, such descriptors do not meet the Property \ref{Property2}. For example, in the atom-level graph representation of 3D protein structures, we can treat the side chain R in each amino acid as a node. Since the R group is free to rotate about the $C_\alpha-R$ bond, proteins with the same graph structure but different R group rotamers will fail to be distinguished by those geometric descriptors~\cite{schutt2017schnet,chen2019graph,klicperadirectional} that are designed for graphs without directional nodes.

\begin{figure}[t]
	\centering
	\includegraphics[scale=0.7]{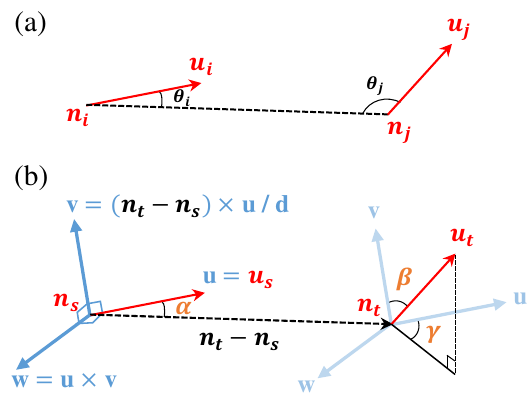}
	\vspace{-5pt}
	\caption{(a) Illustration of two directional nodes $\boldsymbol{n}_i$ and $\boldsymbol{n}_j$ with $\boldsymbol{u}_i$ and $\boldsymbol{u}_j$. (b) Illustration of the computations in Equation (\ref{PFH}).} 
	\vspace{-5pt}
	\label{DNP_figure}
\end{figure}

\subsection{Directional Node Pair (DNP) Descriptor}\label{Robust-Descriptor}
In this section, we propose the Directional Node Pair (DNP) descriptor, which is a robust geometric descriptor designed for 3D molecular structures that is represented by graphs that contain directional nodes. To meet Property \ref{Property1}, we use the distance and angle information between nodes that are invariant to rotations and translations. To meet Property \ref{Property2}, our descriptor should at first be an injective mapping of the geometric relations between a pair of directional nodes. Then an injective mapping of the geometric relations between all nodes can be obtained. 

The idea of DNP descriptor is inspired by the Point Feature Histograms (PFH) descriptor~\cite{rusu2009fast}, which defines a fixed coordinate frame at one of the nodes to indicate the relative geometric information between two nodes. The relationship is expressed using a quadruplet $\left \langle \alpha , \beta , \gamma , d \right \rangle$, where $\alpha , \beta , \gamma$ are 3 angles in the measurement and $d$ is the Euclidean distance between nodes. 

In detail, given two different nodes $\boldsymbol{n}_i=(x_i, y_i, z_i)$ and  $\boldsymbol{n}_j=(x_j, y_j, z_j), i \neq j$ and two direction vectors $\boldsymbol{u}_i = (u_{xi}, u_{yi}, u_{zi})$ and $\boldsymbol{u}_j = (u_{xj}, u_{yj}, u_{zj})$, we have $d={\left \| \boldsymbol{n}_i - \boldsymbol{n}_j  \right \|}_2$. To uniquely define a fixed coordinate frame at one of the two nodes, we first compute $\theta_i = \arccos{(\boldsymbol{u}_i \cdot \boldsymbol{v}_{ji})}$ and $\theta_j = \arccos{(\boldsymbol{u}_j \cdot \boldsymbol{v}_{ij})}$, where $\boldsymbol{v}_{ji} = \boldsymbol{n}_j - \boldsymbol{n}_i$, $\boldsymbol{v}_{ij} = \boldsymbol{n}_i - \boldsymbol{n}_j$. The situation is illustrated in Figure \ref{DNP_figure}(a). Then the three angles $\alpha , \beta , \gamma$ are obtained in the cases below:

(1) In corner cases:
\begin{align}
\small
(\alpha , \beta , \gamma) = 
\begin{cases}
(0, 0, 0),&\text{if }\theta_i \text{ and } \theta_j \text{ do not exist}\\
(\theta_k, 0, 0),&\text{if only }\theta_k \text{ exist}, k = i \text{ or } j\\
(0, \pi / 2, \pi),&\text{if }\theta_i = \theta_j = 0\\ 
(\pi, \pi / 2, \pi),&\text{if }\theta_i = \theta_j = \pi\\
(0, \pi / 2, 0),&\text{if }\theta_i = 0, \theta_j = \pi \text{ or } \theta_i = \pi, \theta_j = 0
\end{cases}.
\nonumber
\end{align}

(2) Otherwise if $\theta_i, \theta_j \neq 0$ and $\pi$, we use the following rules to define the source node $\boldsymbol{n}_s$ and the target node $\boldsymbol{n}_t$:
\begin{align}
\small
\text{if }\theta_i \leq \theta_j,
\left\{\begin{matrix}
\boldsymbol{n}_s = \boldsymbol{n}_i, \boldsymbol{u}_s = \boldsymbol{u}_i\\ 
\boldsymbol{n}_t = \boldsymbol{n}_j, \boldsymbol{u}_t = \boldsymbol{u}_j
\end{matrix}\right.,
\text{else}
\left\{\begin{matrix}
\boldsymbol{n}_s = \boldsymbol{n}_j, \boldsymbol{u}_s = \boldsymbol{u}_j\\ 
\boldsymbol{n}_t = \boldsymbol{n}_i, \boldsymbol{u}_t = \boldsymbol{u}_i
\end{matrix}\right..
\nonumber
\end{align}

In other words, the source node is chosen to have the smaller angle between its associated $\boldsymbol{u}$ and the vector connecting the two nodes. Finally we can create the fixed coordinate frame $\mathrm{uvw}$ at $\boldsymbol{n}_s$ and compute $\alpha , \beta , \gamma$:
\begin{align}
\small
\left\{\begin{aligned}
\mathrm{u} &= \boldsymbol{u}_s \\
\mathrm{v} &= \mathrm{u} \times \frac{\left(\boldsymbol{n}_t - \boldsymbol{n}_s\right)}{d} \\
\mathrm{w} &=\mathrm{u} \times \mathrm{v}
\end{aligned}\right.,
\left\{\begin{aligned}
\alpha &= \mathrm{u} \cdot \frac{\left(\boldsymbol{n}_t - \boldsymbol{n}_s \right)}{d} \\
\beta &= \mathrm{v} \cdot \boldsymbol{u}_t \\
\gamma &= \arctan \left(\mathrm{w} \cdot \boldsymbol{u}_{t}, \mathrm{u} \cdot \boldsymbol{u}_{t}\right)
\end{aligned}\right..
\label{PFH}
\end{align}

\begin{figure*}[t]
	\centering
	\includegraphics[scale=0.7]{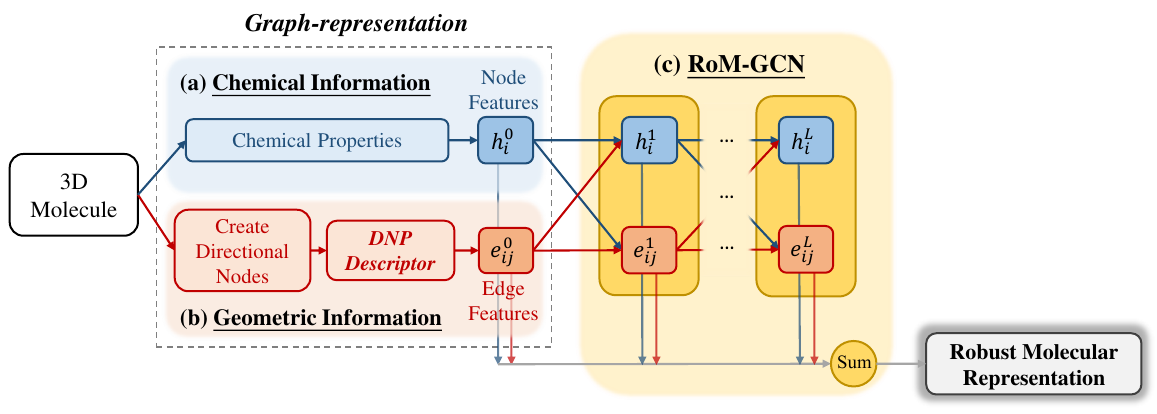}
	\vspace{-5pt}
	\caption{Flowchart of using the RoM-GCN model to learn robust molecular representations.}
	\vspace{-5pt}\label{RoM-GCN_figure}
\end{figure*}

When $\beta = 0$ or $\pi$, we define $\gamma = \pi /2$. The computations are illustrated in in Figure \ref{DNP_figure}(b).

\subsubsection{Properties of DNP Descriptor}
Our proposed DNP descriptor holds the following properites:
\begin{theorem}
The DNP descriptor $D$ of directional node pair $p=(n_i, n_j)$ in graph $G$ has permutational, rotational and translational invariance: For all $n_i, n_j \in G$, $D(p) = f (D(p))$, where $f: \mathbb{R}^3 \rightarrow \mathbb{R}^3$ is a transformation function for permutation, rotation or translation.
\end{theorem}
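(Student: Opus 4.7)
The plan is to verify each of the three invariances (translation, rotation, permutation) by tracking how every ingredient of the quadruplet $\langle \alpha,\beta,\gamma,d\rangle$ transforms, and then observing that the final outputs are unchanged. I would treat the generic case ($\theta_i,\theta_j \notin \{0,\pi\}$) first and handle the piecewise corner cases at the end, since those are defined by constants that are manifestly invariant.

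First I would dispose of translation. Write the transformed pair as $\boldsymbol{n}_i + \boldsymbol{t},\boldsymbol{n}_j + \boldsymbol{t}$ with the directional vectors unchanged. The difference vectors $\boldsymbol{v}_{ji},\boldsymbol{v}_{ij}$ and their norm $d$ are immediately translation-invariant, so $\theta_i,\theta_j$ are invariant, so the source/target assignment is unchanged, and the frame vectors $(\mathrm{u},\mathrm{v},\mathrm{w})$ as well as the three angles $\alpha,\beta,\gamma$ in \eqref{PFH} depend only on $\boldsymbol{u}_s,\boldsymbol{u}_t,(\boldsymbol{n}_t-\boldsymbol{n}_s)/d$, which are all translation-invariant. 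Next I would do rotation: under a rotation $R\in\mathrm{SO}(3)$ every position vector and every directional vector is rotated by $R$, so $d$ is preserved (rotations are isometries), $\theta_i,\theta_j$ are preserved (dot products of simultaneously rotated vectors), and the source/target choice is unchanged. The frame $(\mathrm{u},\mathrm{v},\mathrm{w})$ itself rotates to $(R\mathrm{u},R\mathrm{v},R\mathrm{w})$ by the equivariance of the cross product under $\mathrm{SO}(3)$, but $\alpha,\beta,\gamma$ are built from dot products of that frame with $\boldsymbol{u}_t$ and $(\boldsymbol{n}_t-\boldsymbol{n}_s)/d$, all of which are co-rotated, so every such dot product is unchanged and hence so are $\alpha,\beta,\gamma$.

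Permutation is the subtler case, and I expect it to be the main obstacle, because the defining formulas in \eqref{PFH} are asymmetric in $i$ and $j$: they single out a ``source'' node. The key observation is that the rule ``the source is the node whose associated $\boldsymbol{u}$ makes the smaller angle with the vector toward the other node'' depends only on the unordered pair $\{(\boldsymbol{n}_i,\boldsymbol{u}_i),(\boldsymbol{n}_j,\boldsymbol{u}_j)\}$, not on the labels $i,j$. So swapping $i\leftrightarrow j$ simply relabels which node the input calls $i$ versus $j$, while the pair $(\boldsymbol{n}_s,\boldsymbol{u}_s),(\boldsymbol{n}_t,\boldsymbol{u}_t)$ is unchanged; therefore $d,\alpha,\beta,\gamma$ are literally the same numbers. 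The one place that needs care is the tie $\theta_i=\theta_j$: I would show that when the angles are equal, the two choices of source yield the same quadruplet, because the construction of $(\mathrm{u},\mathrm{v},\mathrm{w})$ and the resulting dot-product formulas become symmetric in the two endpoints when $\boldsymbol{u}_s$ and $\boldsymbol{u}_t$ subtend equal angles with $\boldsymbol{n}_t-\boldsymbol{n}_s$.

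Finally I would discharge the corner-case branch by inspection: each branch returns a tuple of fixed constants (and at most $\theta_k$ itself, which is already shown invariant under each $f$), and the branch selection is governed only by conditions on $\theta_i,\theta_j$ and the existence of the direction vectors, both of which are invariant under translation, rotation, and permutation (the existence condition is symmetric in $i,j$, and the two ``mixed'' cases $\theta_i=0,\theta_j=\pi$ and $\theta_i=\pi,\theta_j=0$ are explicitly identified). Combining the three invariances yields $D(p)=D(f(p))$ for every $f$ of the stated types, completing the proof.
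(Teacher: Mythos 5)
Your proposal is correct and follows essentially the same route as the paper's (very brief) proof sketch: translation and rotation invariance because $d$, $\theta_i$, $\theta_j$, and all the dot products in Equation~(\ref{PFH}) are built from quantities that transform together with the pair, and permutation invariance because the source/target selection depends only on the unordered pair. The one place you go beyond the paper — flagging the tie $\theta_i=\theta_j$, where the rule $\theta_i\le\theta_j$ is \emph{not} label-independent — is a real gap in the paper's one-line assertion that $D(n_i,n_j)=D(n_j,n_i)$, and your claim there does check out: with $\theta_i=\theta_j$ one finds $\alpha=\cos\theta_s$ is equal for either choice, $\beta$ agrees by antisymmetry of the scalar triple product combined with the sign flip of $\boldsymbol{n}_t-\boldsymbol{n}_s$, and the two arguments of the $\arctan$ in $\gamma$ reduce to $\cos\theta_s(\boldsymbol{u}_i\cdot\boldsymbol{u}_j)+\cos\theta_t$ and $\boldsymbol{u}_i\cdot\boldsymbol{u}_j$, both symmetric when $\theta_i=\theta_j$.
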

\begin{proof} (Sketch)
$D$ is permutation invariant since for all $n_i, n_j \in G$, $D(n_i, n_j) = D(n_j, n_i)$. The quadruplets $\left \langle \alpha , \beta , \gamma , d \right \rangle$ in $D$ are either defined for rotational and translational invariant relationships or computed in a fixed coordinate frame $\mathrm{uvw}$ which can be transformed together with $p$. Thus $D$ is rotational and translational invariant.
\end{proof}
\begin{theorem}
The DNP descriptor $D$ of directional node pair $p=(n_i, n_j)$ in graph $G$ is an \textbf{injective} mapping of the geometric relationship between $n_i$ and $n_j$: For all $p_1, p_2$, whenever $D(p_1) = D(p_2)$, then $p_1 = p_2$.
\end{theorem}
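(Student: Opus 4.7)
The plan is to construct an explicit left inverse of $D$: given only the quadruplet $\langle\alpha,\beta,\gamma,d\rangle$, I would show how to reconstruct the directional node pair $(\boldsymbol{n}_s,\boldsymbol{n}_t,\boldsymbol{u}_s,\boldsymbol{u}_t)$ up to the rigid motions and node relabeling already quotiented out by the previous theorem. Once such a reconstruction is available, injectivity on equivalence classes of geometric relationships is automatic. The argument splits naturally into the generic case $\theta_i,\theta_j\notin\{0,\pi\}$ and the five corner cases tabulated in Equation~(\ref{PFH}).

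In the generic case I would pin down a canonical representative as follows. Place $\boldsymbol{n}_s$ at the origin and $\boldsymbol{u}_s$ along a fixed axis $\hat{e}_1$, so that $\mathrm{u}=\hat{e}_1$ by construction. Since $\alpha=\mathrm{u}\cdot(\boldsymbol{n}_t-\boldsymbol{n}_s)/d=\cos\theta_s$, the scalar $\alpha$ fixes the opening angle of $\boldsymbol{n}_t-\boldsymbol{n}_s$ relative to $\mathrm{u}$; pinning the connecting vector to a chosen half-plane containing $\mathrm{u}$ (which absorbs the last rotational freedom into the rigid-motion equivalence) together with the length $d$ then reconstructs $\boldsymbol{n}_t$ uniquely, and $\mathrm{v},\mathrm{w}$ follow from their cross-product definitions. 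Next, $\boldsymbol{u}_t$ is recovered coordinate-by-coordinate in the $\mathrm{uvw}$ frame: $\mathrm{v}\cdot\boldsymbol{u}_t=\beta$ directly, while $\gamma=\arctan(\mathrm{w}\cdot\boldsymbol{u}_t,\mathrm{u}\cdot\boldsymbol{u}_t)$ read as the two-argument arctangent fixes the ratio of the remaining two components, and the unit-norm constraint $(\mathrm{u}\cdot\boldsymbol{u}_t)^2+\beta^2+(\mathrm{w}\cdot\boldsymbol{u}_t)^2=1$ fixes their magnitudes. The selection rule $\theta_s\le\theta_t$ unambiguously identifies which input node plays the role of source, with the degenerate tie $\theta_i=\theta_j$ yielding the same configuration under either labeling.

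The corner-case analysis is where I expect most of the work. For each of the five reserved triples $(0,0,0)$, $(\theta_k,0,0)$, $(0,\pi/2,\pi)$, $(\pi,\pi/2,\pi)$, $(0,\pi/2,0)$, I would verify that it is not attainable in the generic branch, that it corresponds to a single equivalence class of configurations — e.g., $(0,0,0)$ to pairs with both $\boldsymbol{u}$'s undefined, $(0,\pi/2,\pi)$ to both $\boldsymbol{u}$'s aligned with the connecting vector — and that the five reserved outputs are pairwise distinct, which is immediate from the table. The main obstacle is precisely these degenerate loci: exactly when $\theta_s\in\{0,\pi\}$ the cross product $\mathrm{u}\times(\boldsymbol{n}_t-\boldsymbol{n}_s)$ vanishes and the $\mathrm{uvw}$ frame collapses, so the generic reconstruction no longer applies and the manual overrides must be checked by hand to be both well defined under rigid motions and genuinely injective. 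A subsidiary subtlety is making precise what ``geometric relationship'' means, i.e.\ fixing the equivalence relation on ordered (and then unordered) directional node pairs, so that the reconstruction is well defined up to exactly the right group action and the source/target rule commutes with the unordered projection.
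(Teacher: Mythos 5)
Your plan has the same skeleton as the paper's proof, which disposes of the theorem in two asserted sentences: the generic branch is injective, and the reserved corner triples do not conflict with generic outputs. What you add --- the explicit left-inverse reconstruction of $(\boldsymbol{n}_s,\boldsymbol{n}_t,\boldsymbol{u}_s,\boldsymbol{u}_t)$ from $\langle\alpha,\beta,\gamma,d\rangle$ --- is precisely the substance the paper's sketch omits, so the approach matches and goes further. Three points to fix when you execute it. First, $\mathrm{v}=\mathrm{u}\times(\boldsymbol{n}_t-\boldsymbol{n}_s)/d$ and $\mathrm{w}=\mathrm{u}\times\mathrm{v}$ are not unit vectors; both have norm $\sin\theta_s=\sqrt{1-\alpha^2}$, so your constraint $(\mathrm{u}\cdot\boldsymbol{u}_t)^2+\beta^2+(\mathrm{w}\cdot\boldsymbol{u}_t)^2=1$ is false as written and must carry the factor $1-\alpha^2$ on the $\mathrm{v}$ and $\mathrm{w}$ components (harmless, since $\alpha$ is part of the descriptor, but it has to be said). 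Second, the fact that makes $(0,0,0)$ and $(\theta_k,0,0)$ unattainable in the generic branch is the paper's override $\gamma:=\pi/2$ whenever $\beta\in\{0,\pi\}$; a generic configuration with $\alpha=0$ and $\boldsymbol{u}_t=\mathrm{u}$ would otherwise output exactly $(0,0,0)$ and collide with the ``no directional vectors'' case, so your corner-case checklist must invoke that override explicitly rather than hope each reserved triple is unreachable for free (the triples with $\beta=\pi/2$ are unreachable for the easier reason that generically $|\beta|\le 1<\pi/2$). Third, your parenthetical that the tie $\theta_i=\theta_j$ ``yields the same configuration under either labeling'' is an unproved claim, and for generic $\boldsymbol{u}_i,\boldsymbol{u}_j$ with equal opening angles the two choices of source give different $(\beta,\gamma)$; what actually saves well-definedness is that the rule deterministically selects $\boldsymbol{n}_i$ when $\theta_i\le\theta_j$, so you should either verify the symmetry or retreat to injectivity on ordered pairs. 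None of these is fatal, but as stated the proposal is a program with these holes rather than a proof.
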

\begin{proof} (Sketch)
For any $p$, if it meets the case (2) of the computation of $D$, $D$ will injectively map $p$ to $\left \langle \alpha , \beta , \gamma , d \right \rangle$. If $p$ meets the case (1), the values of $\alpha , \beta , \gamma$ do not conflict with the values in case (2). Thus, $D$ is an injective mapping.
\end{proof}
\begin{corollary}
The DNP descriptor $D$ of directional node pair $p=(n_i, n_j)$ in graph $G$ is robust for all $p$ under Property \ref{Property1} and \ref{Property2}. \label{P_robust}
\end{corollary}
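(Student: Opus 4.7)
The plan is to observe that Corollary \ref{P_robust} is essentially a packaging statement: robustness under Property \ref{Property1} and Property \ref{Property2} is precisely the conjunction of (a) rotational and translational invariance and (b) injectivity, both applied to the directional node pair $p$. Since Theorem 1 supplies (a) and Theorem 2 supplies (b), the corollary follows by combining them, provided we justify that the two theorems speak about the same object as the two properties.

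First, I would identify a directional node pair $p = (n_i, n_j)$ together with its associated vectors $\boldsymbol{u}_i, \boldsymbol{u}_j$ as a bona fide molecular substructure in $\mathbb{R}^3$, so that $p$ plays the role of $s$ in Property \ref{Property1} and an element of $S$ in Property \ref{Property2}. Under this identification the statement ``$D$ is robust for $p$'' unfolds into exactly the two conclusions of Theorem 1 and Theorem 2: $D(p) = f(D(p))$ for any rotation or translation $f$, and $D(p_1) = D(p_2) \Rightarrow p_1 = p_2$.

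Next, I would invoke Theorem 1 to obtain Property \ref{Property1}. The permutation invariance part is a bonus that is compatible with, but not required by, Property \ref{Property1}, so I would note it in passing and then restrict attention to the rotation/translation clause. Then I would invoke Theorem 2 to conclude Property \ref{Property2}, emphasizing that Theorem 2 already asserts injectivity of $D$ on the full set of directional node pairs, which is exactly what Property \ref{Property2} demands when $S$ is instantiated as the space of such pairs.

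The main obstacle, such as it is, is purely a matter of matching quantifiers and types: Properties \ref{Property1} and \ref{Property2} are stated for generic molecular structures $s$, whereas Theorems 1 and 2 are stated for pairs $p$. I would explicitly argue that the pair-level guarantees are sufficient because the DNP descriptor of a molecule is built up from its pair descriptors; robustness at the atomic unit (a pair) lifts to the full structure since any rotation or translation acts uniformly on every pair and any collision $D(s_1)=D(s_2)$ between full-structure descriptors would force pairwise collisions, which Theorem 2 rules out. With this bridge in place, Corollary \ref{P_robust} is immediate.
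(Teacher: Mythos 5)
Your core argument is exactly the paper's (implicit) proof: Corollary \ref{P_robust} is a direct conjunction of Theorem 1 (which instantiates Property \ref{Property1} for a pair $p$, with permutation invariance as an unneeded bonus) and Theorem 2 (which instantiates Property \ref{Property2} with $S$ taken to be the set of directional node pairs), and the paper offers no further proof because none is needed. One caution about your final paragraph: the corollary as stated quantifies only over pairs $p$, so no ``lifting'' to whole molecules is required, and the lift you sketch is in fact \emph{not} claimed by the paper --- immediately after the corollary the authors stress that the DNP descriptor is robust \emph{only} for pairs and that the GCN aggregation yields merely an \emph{approximation} of a robust descriptor for all nodes. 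Your assertion that a collision between full-structure descriptors would force a pairwise collision is unjustified (the aggregated representation is a sum over pairs, and sums of distinct multisets of quadruplets can coincide), so you should drop that step rather than present it as part of the corollary's proof.
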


As stated in Corollary \ref{P_robust}, the DNP descriptor is only robust for directional node pairs. However, to represent the structure of a molecule, the geometric relationships among \textit{all} nodes have to be considered. To obtain a robust geometric descriptor for all nodes, we build a GCN in Section \ref{RoM-GCN} with DNP descriptor. The DNP descriptor with the message-passing scheme is an approximation of the robust geometric descriptor for all nodes in $G$, which will be discussed in Section \ref{RoM-GCN}.

\subsubsection{Related Non-robust Descriptors}
Here we cover other existing geometric descriptors that can get the relationships between directional node pairs. None of them are robust under our definitions.

The Point Feature Histograms (PFH) descriptor proposed the computations in Equation (\ref{PFH}) which inspires the idea of our DNP descriptor. Since the PFH descriptor is originally proposed for points associated with surface normal vectors, Equation (\ref{PFH}) can get invalid results when the nodes are associated with arbitrary directional vectors. So that the PFH descriptor is even not a well-defined mapping.

Another popular work~\cite{drost2010model} proposed the Point Pair Feature (PPF) descriptor to describe the geometric relationships between oriented point pairs: For two points $m_1$ and $m_2$ with normals $n_1$ and $n_2$, the feature $\mathbf{F}$ is defined as: $
\mathbf{F}\left(m_1, m_2\right)=\left(\|d\|_{2}, \angle\left({n}_{1}, {d}\right), \angle\left({n}_{2}, {d}\right), \angle\left({n}_{1}, {n}_{2}\right)\right)
$,
where $d=m_2-m_1$, and $\angle\left(a, b\right) \in [0, \pi]$ denotes the angle between vector $a$ and $b$. However, the PPF descriptor fails to be an injective mapping because it will map any pair of chiral structures to the same embedding.

A recent work~\cite{fout2017protein} predicts protein interface using graph representation for protein structures. In the graph representation, each node is an amino acid residue associated with the normal vector of the amide plane. The geometric relationships between two nodes are described using the Euclidean distance $d$ between the nodes and the angle $\theta$ between two normal vectors. This descriptor is not robust because different node pairs may have the same $d$ and $\theta$ values.

\section{Robust Molecular Graph Convolutional Network (RoM-GCN)}\label{RoM-GCN}

In this section, we will design a novel GCN that uses the quadruplets computed from the DNP descriptor proposed in Section \ref{Robust-Descriptor}. Since the DNP descriptor describes the geometric relationships between any given node pair, we consider the quadruplet $\left \langle \alpha , \beta , \gamma , d \right \rangle$ as the input features of the edge that connect the two nodes. As we will discuss later, our GCN learns the robust representations for 3D molecules. So that we name our proposed GCN as the Robust Molecular Graph Convolutional Network (RoM-GCN). Figure \ref{RoM-GCN_figure} shows the flowchart of the RoM-GCN model.

\subsection{Problem Statement}
The problem we want to solve is how to learn the robust graph representation $h_g$ for any given 3D molecule. The representation is considered to be robust if the geometric information that a structure contains is robust under Property \ref{Property1} and \ref{Property2}. Let $G = ( V , E )$ with a set of nodes $V$ and a set of edges $E$ be the graph representation of a 3D molecule. For any node $v_i \in V$, it has node feature $h_i$ and is associated with a directional vector $\boldsymbol{u}_i$. For any edge $\epsilon_{ij} \in E$ connecting node $v_i$ and $v_j$, it has edge feature $e_{ij}$.

The nodes contain the features $h$ representing chemical properties in the 3D molecule related to each node, which is shown in Figure \ref{RoM-GCN_figure}(a). For edge features $e$, we use the DNP descriptor to compute quadruplet $\left \langle \alpha , \beta , \gamma , d \right \rangle$ between two connected nodes as shown in Figure \ref{RoM-GCN_figure}(b). The edge features encode the geometric information of a 3D molecule. With those features, we can use our RoM-GCN to learn a robust representation for different tasks.

\subsection{Framework}
To make our RoM-GCN being injective when updating node and edge features, we use the summation function in all update functions as suggested in~\cite{xu2018how}.

For the node update function of node $v_i$ in the $l$-th convolutional layer, we sum the feature of $v_i$, the features of the neighbor nodes of $v_i$ and the features of the edges connecting $v_i$ and its neighbors:
\begin{align}
h_{i}^{l} = \sigma_{h}^{l} \big( h_{i}^{l-1} + \sum\nolimits_{j \in N(i)}h_{j}^{l-1} + \sum\nolimits_{j \in N(i)}e_{ij}^{l-1}\big), \label{node_update}
\end{align}
where $N(i)$ denotes the neighbors of $v_i$, and the superscript $l$ denotes the items belong to the $l$-th layer. $\sigma_h$ is an injective nonlinear function for node features.

To update the feature of edge $\epsilon_{ij}$, we sum the features of the $v_i$ and $v_j$ together with the feature of $\epsilon_{ij}$:
\begin{align}
e_{ij}^{l} = \sigma_{e}^{l} \big( e_{ij}^{l-1} + h_{i}^{l-1} + h_{j}^{l-1}\big), \label{edge_update}
\end{align}
where $\sigma_e$ is an injective nonlinear function for edge features.

After $L$ convolutional layers, we compute the final representation $h_{g}$ for the whole graph using the following readout function:
\begin{align}
h_{g} = \concat_{k=0}^L \big(\sum\nolimits_{v_i\in V}h_{i}^{k} + \sum\nolimits_{\epsilon_{ij} \in E}e_{ij}^{k}\big), \label{Readout}
\end{align}
where $\parallel$ represents concatenation. 

Comparing with previous GCN-based models proposed for the representation learning of molecules~\cite{kearnes2016molecular,schutt2017schnet,fout2017protein,klicperadirectional}, our RoM-GCN model has the following unique properties: 1. Our model can update node features and edge features in parallel; 2. In the update functions, we use concise summations for injective mappings. 3. In the readout function, we take both node and edge features into consideration and get the representation for a hierarchical structure.

\subsection{Geometric Information in RoM-GCN}
As discussed in Section \ref{Robust-Descriptor}, our DNP descriptor itself is robust for directional node pairs. With the features computed from the DNP descriptor as edge features, we can approximate the geometric relationships between all nodes in a graph using RoM-GCN:

In the first convolutional layer, Equation (\ref{node_update}) aggregates all edge features that connecting to a node $v_i$. Thus, the resulting node feature $h_{i}^{1}$ contains the geometric relationships between $v_i$ and its neighbors, while the geometric information between the neighbors of $v_i$ exists in the updated neighboring node features. In the next layer, Equation (\ref{node_update}) further aggregates all neighboring node features of $v_i$ to get $h_{i}^{2}$. Note that all node features in this layer contain the geometric information computed from the previous layer. Thus, $h_{i}^{2}$ contains the approximate geometric relationships between all nodes that are connected with $v_i$. In \cite{rusu2009fast}, a similar approach was used for the Fast PFH descriptor. As the number of layers getting larger, the node will aggregate the geometric information in a longer range. Finally, using Equation (\ref{Readout}), the graph-level feature will be a robust geometric representation of the whole molecule with a hierarchical structure.

\section{Experiments}
\subsection{Datasets}
\paragraph{Protein datasets.} We adopt the TOUGH-C1 dataset, nucleotide, and heme subsets in~\cite{pu2019deepdrug3d} for the classification of the ligand type of each protein substrate binding pocket to validate our proposed method. TOUGH-C1, nucleotide and heme contain 4164, 3499 and 2542 residues labeled with their ligand types, respectively.

\paragraph{Small molecule datasets.} To evaluate the performance of our model on small molecules, we use the DUD-E decoys datasets~\cite{mysinger2012directory}, which contain active and inactive chemicals for 4 proteins (hivrt, hivpr, cxcr4 or akt1) for the classification task. We use AutoDock Vina~\cite{trott2010autodock} to search for the 3D molecules in the binding pockets. The hivrt, hivpr, cxcr4 and akt1 datasets contain 607, 1395, 122 and 422 active molecules and the same number of inactive molecules, respectively.

\begin{figure}[t]
	\centering
	\includegraphics[scale=0.75]{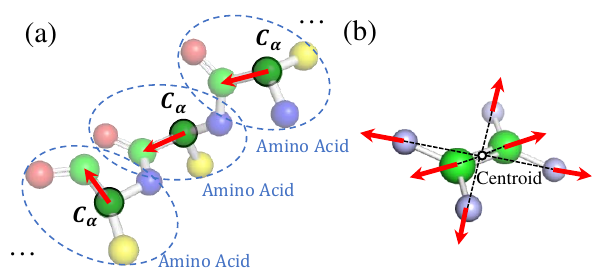}
	\vspace{-5pt}
	\caption{Illustrations of the definition of directional vectors in (a) proteins and (b) small molecules.} 
	\vspace{-5pt}
	\label{Vector}
\end{figure}

\begin{table*}[tbp]
\newcommand{\tabincell}[2]{\begin{tabular}{@{}#1@{}}#2\end{tabular}}
\centering
\resizebox{\linewidth}{!}{
\begin{tabular}{c|c|ccc|c|c|cc|cc|cc}
\toprule
\multicolumn{1}{c|}{\multirow{3}{*}{{\bf Ablation}}} &
\multicolumn{1}{c|}{\multirow{3}{*}{\tabincell{c}{{\bf Geometric} \\ {\bf Descriptor}}}} & \multicolumn{3}{c|}{{\bf Edge Feature}} &
\multicolumn{1}{c|}{\multirow{3}{*}{\tabincell{c}{{\bf \#} \\ {\bf Filters}}}} & \multicolumn{1}{c|}{\multirow{3}{*}{\tabincell{c}{{\bf \#} \\ {\bf Layers}}}} &
\multicolumn{2}{c|}{\multirow{2}{*}{{\bf TOUGH-C1}}} &
\multicolumn{2}{c|}{\multirow{2}{*}{{\bf Nucleotide}}} &
\multicolumn{2}{c}{\multirow{2}{*}{{\bf Heme}}}\\
\cline{3-5}
& & \multirow{2}{*}[-1pt]{\tabincell{c}{{\bf Node} \\ {\bf Update}}} & \multirow{2}{*}[-1pt]{\tabincell{c}{{\bf Edge} \\ {\bf Update}}} & \multirow{2}{*}[-1pt]{{\bf Readout}} & & & & & & & &\\
\cline{8-13}
& & & & & & & \multicolumn{1}{c}{\multirow{1}{*}[-1pt]{{\bf AUC ($\Delta$AUC)}}} & \multirow{1}{*}[-1pt]{{\bf F1 ($\Delta$F1)}} & \multicolumn{1}{c}{\multirow{1}{*}[-1pt]{{\bf AUC ($\Delta$AUC)}}} & \multirow{1}{*}[-1pt]{{\bf F1 ($\Delta$F1)}} & \multicolumn{1}{c}{\multirow{1}{*}[-1pt]{{\bf AUC ($\Delta$AUC)}}} & \multirow{1}{*}[-1pt]{{\bf F1 ($\Delta$F1)}} \\
\midrule
{\bf Reference} & $\alpha , \beta , \gamma , d$ & \checkmark & \checkmark & \checkmark & 128 & 2 & {\bf 0.951} $\ $(0.000) & {\bf 0.775} $\ $(0.000) & {\bf 0.965} $\ $(0.000) & {\bf 0.910} $\ $(0.000) & {\bf 0.983} $\ $(0.000) & {\bf 0.944} $\ $(0.000) \\
\midrule
\multicolumn{1}{c|}{\multirow{2}{*}{\tabincell{c}{\bf Geometric \\ \bf Information}}} & $d$ & \checkmark & \checkmark & \checkmark & 128 & 2 & 0.941 (-0.010) & 0.740 (-0.035) & 0.953 (-0.012) & 0.893 (-0.017) & 0.977 (-0.006) & 0.937 (-0.007) \\
& $\theta , d$ & \checkmark & \checkmark & \checkmark & 128 & 2 & 0.949 (-0.002) & 0.772 (-0.003) & 0.962 (-0.003) & 0.907 (-0.003) & 0.980 (-0.003) & 0.943 (-0.001) \\
\midrule
\multicolumn{1}{c|}{\multirow{4}{*}{\tabincell{c}{\bf Edge \\ \bf Feature}}} & $\alpha , \beta , \gamma , d$ &  &  &  & 128 & 2 & 0.919 (-0.031) & 0.707 (-0.068) & 0.941 (-0.024) & 0.874 (-0.036) & 0.966 (-0.017) & 0.916 (-0.028) \\
& $\alpha , \beta , \gamma , d$ & \checkmark &  &  & 128 & 2 & 0.917 (-0.034) & 0.706 (-0.069) & 0.940 (-0.025) & 0.875 (-0.035) & 0.965 (-0.017) & 0.920 (-0.024) \\
& $\alpha , \beta , \gamma , d$ & \checkmark & \checkmark &  & 128 & 2 & 0.949 (-0.002) & 0.772 (-0.003) & {\bf 0.965} ($\ $0.000) & {\bf 0.910} ($\ $0.000) & 0.981 (-0.002) & 0.943 (-0.001) \\
& $\alpha , \beta , \gamma , d$ & \checkmark &  & \checkmark & 128 & 2 & 0.949 (-0.002) & 0.762 (-0.013) & 0.960 (-0.004) & 0.902 (-0.008) & 0.976 (-0.006) & 0.933 (-0.011) \\
\midrule
\multicolumn{1}{c|}{\multirow{3}{*}{{\bf Width}}} & $\alpha , \beta , \gamma , d$ & \checkmark & \checkmark & \checkmark & 16 & 2 & 0.927 (-0.024) & 0.716 (-0.059) & 0.939 (-0.026) & 0.872 (-0.038) & 0.969 (-0.014) & 0.918 (-0.026) \\
& $\alpha , \beta , \gamma , d$ & \checkmark & \checkmark & \checkmark & 32 & 2 & 0.935 (-0.016) & 0.733 (-0.042) & 0.951 (-0.014) & 0.889 (-0.021) & 0.978 (-0.005) & 0.935 (-0.009) \\
& $\alpha , \beta , \gamma , d$ & \checkmark & \checkmark & \checkmark & 64 & 2 & 0.945 (-0.006) & 0.769 (-0.006) & 0.963 (-0.002) & 0.908 (-0.002) & 0.980 (-0.003) & 0.938 (-0.006) \\
\midrule
\multicolumn{1}{c|}{\multirow{4}{*}{{\bf Depth}}} & $\alpha , \beta , \gamma , d$ & \checkmark & \checkmark & \checkmark & 128 & 0 & 0.947 (-0.004) & 0.758 (-0.017) & 0.958 (-0.007) & 0.896 (-0.014) & 0.977 (-0.006) & 0.933 (-0.011) \\
& $\alpha , \beta , \gamma , d$ & \checkmark & \checkmark & \checkmark & 128 & 1 & {\bf 0.952} ($\ $0.001) & {\bf 0.775} ($\ $0.000) & {\bf 0.965} ($\ $0.000) & {\bf 0.911} ($\ $0.001) & 0.981 (-0.002) & 0.942 (-0.002) \\
& $\alpha , \beta , \gamma , d$ & \checkmark & \checkmark & \checkmark & 128 & 3 & 0.946 (-0.005) & 0.759 (-0.016) & 0.964 (-0.001) & 0.908 (-0.002) & 0.981 (-0.002) & 0.941 (-0.003) \\
& $\alpha , \beta , \gamma , d$ & \checkmark & \checkmark & \checkmark & 128 & 4 & 0.949 (-0.002) & 0.768 (-0.007) & 0.963 (-0.002) & {\bf 0.910} ($\ $0.000) & 0.981 (-0.002) & 0.943 (-0.001) \\
\bottomrule
\end{tabular}
}
\vspace{-5pt}
\caption{Results of ablation study of RoM-GCN on protein datasets. We compare the reference model to ablated variants. We use the same configurations and hyper-parameters except those for ablations on all models. The AUC and F1 scores are reported. $\Delta$AUC and $\Delta$F1 denote the difference with respect to the reference model. We highlight the results of the reference model and the results higher than the reference.}
\vspace{-5pt}
\label{Ablation}
\end{table*}

\begin{table}[tbp]
\newcommand{\tabincell}[2]{\begin{tabular}{@{}#1@{}}#2\end{tabular}}
\centering
\resizebox{\linewidth}{!}{
\begin{tabular}{c|cc|cc|cc}
\toprule
\multicolumn{1}{c|}{\multirow{2}{*}{{\bf Dataset}}} &
\multicolumn{2}{c|}{\multirow{1}{*}{{\bf PointNet}}} &
\multicolumn{2}{c|}{\multirow{1}{*}{{\bf 3DCNN}}} &
\multicolumn{2}{c}{\multirow{1}{*}{{\bf RoM-GCN}}}\\
\cline{2-7}
& \multicolumn{1}{c}{\multirow{1}{*}[-2pt]{{\bf AUC (std)}}} & \multirow{1}{*}[-2pt]{{\bf F1 (std)}} & \multicolumn{1}{c}{\multirow{1}{*}[-2pt]{{\bf AUC (std)}}} & \multirow{1}{*}[-2pt]{{\bf F1 (std)}} & \multicolumn{1}{c}{\multirow{1}{*}[-2pt]{{\bf AUC (std)}}} & \multirow{1}{*}[-2pt]{{\bf F1 (std)}} \\
\midrule
{\bf TOUGH-C1} & 0.852 (0.025) & 0.545 (0.033) & 0.935 (0.020) & {\bf 0.788 (0.020)} & {\bf 0.951 (0.018)} & 0.775 (0.046) \\
{\bf Nucleotide} & 0.746 (0.023) & 0.651 (0.026) & 0.913 (0.012) & 0.819 (0.014) & {\bf 0.965 (0.009)} & {\bf 0.910 (0.015)} \\
{\bf Heme} & 0.822 (0.029) & 0.600 (0.052) & 0.947 (0.018) & 0.811 (0.038) & {\bf 0.983 (0.007)} & {\bf 0.944 (0.019)} \\
\midrule
{\bf hivrt} & 0.758 (0.034) & 0.698 (0.034) & 0.915 (0.033) & 0.840 (0.040) & {\bf 0.984 (0.009)} & {\bf 0.949 (0.016)}\\
{\bf hivpr} & 0.919 (0.030) & 0.840 (0.033) & 0.984 (0.020) & 0.945 (0.022) & {\bf 0.997 (0.003)} & {\bf 0.991 (0.005)}\\
{\bf cxcr4} & 0.784 (0.088) & 0.745 (0.062) & 0.960 (0.049) & 0.917 (0.048) & {\bf 0.990 (0.021)} & {\bf 0.982 (0.022)}\\
{\bf akt1}  & 0.894 (0.028) & 0.799 (0.026) & 0.974 (0.023) & 0.923 (0.028) & {\bf 0.996 (0.008)} & {\bf 0.980 (0.015)}\\
\bottomrule
\end{tabular}
}
\vspace{-5pt}
\caption{Results of RoM-GCN and baseline models on protein and small molecule datasets.}
\vspace{-5pt}
\label{Baseline}
\end{table}

\subsection{Models and Configurations}
To test our RoM-GCN model, we use PointNet~\cite{qi2017pointnet} and the 3DCNN in~\cite{pu2019deepdrug3d} as baselines. The input graphs of RoM-GCN are created as follows: In the protein datasets, we treat each amino acid residue as a node. The position of alpha carbon $C_{\alpha}$ in each residue is used to define edges: Two residues are connected with an edge if the distance between their $C_{\alpha}$ is smaller than 15 $\mathrm{\AA}$. The directional vector of each node is defined as the direction from $C_{\alpha}$ to the carboxyl carbon in the same residue, which is shown in Figure \ref{Vector}(a). In small molecules datasets, we consider each heavy atom as a node and each bond as an edge. To define directional vectors, we first find the centroid for each molecule, then the direction is from the global centroid to each atom as shown in Figure \ref{Vector}(b). In this way, the directional vectors reserve global geometric information. To create the input of 3DCNN, we use a 33x33x33 input voxel grid contains only the alpha carbons in each protein. For the input of PointNet, we consider each node in the graphs for RoM-GCN as a point to build a point cloud for each molecule. Moreover, for 3DCNN and PointNet, in order to standardize the input orientation, we apply the same principle component based rotation procedure as in~\cite{pu2019deepdrug3d}.

In RoM-GCN, we use 2-layer MLP to approximate all injective non-linear functions. To make the comparison fair, we fix the number of layers as 2 and the number of filters as 128 in all models. The classifiers are 2-layer MLP classifiers with the Rectified Linear Units. For the imbalanced datasets, we use the weighted cross-entropy loss function. Batch normalization is applied to every layer in MLP. All models are trained using the Adam optimizer with an initial learning rate between $1e^{-3}$ and $1e^{-4}$ and the learning rate decays $50\%$ every 50 epochs. For all experiments, we perform 10-fold cross-validation and repeat the experiments 5 times for each dataset and each model. We choose the epoch with the best testing accuracy averaged over all folds to report the results, which are the AUC (Area under the Curve of ROC) and F1 scores.

\subsection{Results}
\subsubsection{Ablation Study on Protein Datasets}
We conduct an ablation study of our RoM-GCN model on the protein binding pocket datasets. In the study, we compare different geometric descriptors used for the edge features, different ways to incorporate the edge features and different parameters (number of filters and layers) of the model. A reference RoM-GCN model is compared with ablated variants. The goal of the ablation study is to investigate the contribution of each component in our model. In Table~\ref{Ablation} we list the configurations of all RoM-GCN variants and report the results of the ablation study.

\paragraph{Effect of geometric descriptor.} There are three different geometric descriptors to be compared: 1. Only Euclidean distance $d$ between nodes; 2. $d$ and the angle $\theta$ between the normal vectors of the amide plane~\cite{fout2017protein}; 3. Our DNP descriptor using $\alpha , \beta , \gamma, d$. The results in Table~\ref{Ablation} show that our DNP descriptor outperforms the descriptor that only has distance information. The model adopting the descriptor used in~\cite{fout2017protein} can get comparable performance with the reference model, because, in protein structures, the information of $d$ and $\theta$ is sufficient to determine the unique conformation due to the constraints of peptide bonds. Thus, this descriptor is robust merely for protein structures, but cannot be applied to small molecules where theta cannot be defined. The improved performance of using robust descriptors implies that the geometric information plays a critical role in the molecular representation learning task.

\paragraph{Effect of ways to incorporate edge features.} We test different ways to incorporate the edge features in our RoM-GCN model, which include ignoring edge features, adopting edge features in the node updates, having the edge updates, and using edge features in the readout function. Results in Table~\ref{Ablation} show that ignoring edge features or only adopting edge features in the node updates get the worst performance. While by adding the edge update function or using the edge features in the readout function, the performance can be improved compared to the model without any edge information. Thus, it is necessary to correctly use edge features for better results. Our reference model gets the overall best performance by using all three components to incorporate the edge features.

\paragraph{Effect of parameters.} We evaluate the effects of the number of filters and layers of our RoM-GCN model. The results in Table~\ref{Ablation} show that a larger number of filters improves network capacity, which is necessary for learning better molecular representations. However, simply increasing the number of layers does not guarantee a better performance without careful design of the network.

\subsubsection{Comparison with Baselines}
We use the reference RoM-GCN in our ablation study to compare with baseline models. In Table~\ref{Baseline} we show the comparison results of all models on the protein and small molecule datasets. On all datasets, the performance of PointNet has large gaps comparing with 3DCNN and our RoM-GCN. This implies that although the point cloud representation can fully capture the geometric information of a 3D molecular structure using xyz coordinates, it does not exhibit good generalization power for molecules. For 3DCNN, it outperforms PointNet with the voxelized input generated from 3D molecular structures and can get the highest F1 score on the TOUGH-C1 dataset. However, neither of the 3D molecular representations of PointNet nor 3DCNN captures robust and rich geometric information. In contrast, our RoM-GCN uses the DNP descriptor to extract robust geometric information from molecular graphs with directional nodes. Moreover, the rich directional information shown in Figure \ref{Vector} can be explicitly embedded and updated using our novel aggregation algorithms in RoM-GCN. As a result, RoM-GCN overall outperforms the baselines on all seven datasets.

\section{Conclusion}
In this work, we successfully developed a universal method to learn the representations with robust geometric information for different types of 3D molecules. Particularly, we proposed the DNP descriptor to obtain the robust geometric information in 3D molecular structures. To exploit the geometric and chemical features captured in the DNP descriptor, we propose the RoM-GCN model, which is capable to take the advantage of both node and edge features and learn robust representations at the molecular level. Experiments on protein and small molecule datasets demonstrate the superiority of the DNP descriptor and the RoM-GCN model. As a future direction, it would be interesting to develop better ways to define directional vectors. 

\bibliographystyle{named}
\bibliography{main}

\end{document}